\documentclass{article} 
\usepackage{iclr2025_conference,times}


\usepackage{amsmath,amsfonts,bm}









\def\eqref#1{equation~\ref{#1}}









\def\1{\bm{1}}










\DeclareMathAlphabet{\mathsfit}{\encodingdefault}{\sfdefault}{m}{sl}
\SetMathAlphabet{\mathsfit}{bold}{\encodingdefault}{\sfdefault}{bx}{n}













\usepackage{hyperref}
\usepackage{url}
\usepackage{graphicx}
\usepackage{booktabs}       
\usepackage{amsfonts}       
\usepackage{nicefrac}       
\usepackage{microtype}      
\usepackage{amsmath}
\usepackage{amsthm}        
\usepackage{multirow}

\newtheorem{theorem}{Theorem}[section]
\newtheorem{lemma}[theorem]{Lemma}
\newtheorem{proposition}[theorem]{Proposition}
\newtheorem{corollary}[theorem]{Corollary}
\newtheorem{definition}[theorem]{Definition}
\newtheorem{remark}[theorem]{Remark}
\usepackage{tikz}           
\usepackage{algorithm}
\usepackage{algorithmic}
\usepackage{xcolor}     
\usetikzlibrary{shadows, arrows.meta, positioning, calc, shapes.geometric}

\title{Geometric and Dynamic Scaling in Deep Transformers}
\author{Haoran Su \\
New York University\\
\texttt{haoran.su@nyu.edu}
\And 
Chenyu You \\
Stony Brook University \\
\texttt{chenyu.you@stonybrook.edu}
}

\iclrfinalcopy 

\begin{document}

\maketitle

\begin{abstract}
Scaling Transformer architectures to extreme depth often leads to rank collapse: representations become redundant and degenerate despite modern normalization schemes. We argue this is fundamentally \textbf{a geometric problem}. Standard residual connections implicitly assume monotonic feature accumulation is beneficial, but provide no mechanism to constrain update directions or erase outdated information. As depth increases, this causes uncontrolled drift from the semantic manifold and representational collapse. We propose the \textbf{Manifold-Geometric Transformer (MGT)}, a unified framework addressing these failures through two orthogonal principles. First, \textit{manifold-constrained hyper-connections} (mHC) restrict residual updates to valid tangent space directions, preventing manifold drift. Second, \textit{deep delta learning} (DDL) enables data-dependent, non-monotonic updates that support feature erasure rather than unconditional accumulation. Together, mHC controls update \textit{direction} while DDL controls \textit{magnitude and sign}, yielding stable geometric evolution across depth.

Our theoretical analysis predicts that coupling geometric constraints with dynamic erasure is essential for scaling beyond current depth limits. We design a rigorous evaluation protocol for ultra-deep networks ($100+$ layers) to test whether geometry, not depth itself, is the fundamental bottleneck in Transformer scalability.
\end{abstract}

\section{Introduction}

The residual connection is the cornerstone of modern Deep Learning, allowing gradients to flow through hundreds of layers. However, the standard additive update $\mathbf{x}_{l+1} = \mathbf{x}_l + \mathcal{F}(\mathbf{x}_l)$ makes a strong assumption: that the optimal update is always an unconstrained vector addition in Euclidean space. 
Recent geometric deep learning perspectives suggest that data flows on low-dimensional non-linear manifolds embedded in high-dimensional space. An unconstrained update $\mathcal{F}(\mathbf{x}_l)$ often points ``off-manifold,'' leading to feature degradation and rank collapse in deep layers.

Simultaneously, the additive nature implies a ``write-only'' memory mechanism. It is difficult for a network to learn to ``erase'' information (i.e., learn $\mathcal{F}(\mathbf{x}) \approx -\mathbf{x}$) when context shifts. This results in the ``residual accumulation'' problem, where noise accumulates over depth.

To resolve these dual challenges, we introduce the \textbf{Manifold-Geometric Transformer (MGT)}. We propose that an ideal layer update should satisfy two conditions:
\begin{enumerate}
    \item \textbf{Geometric Validity (via mHC):} The update vector should lie in the tangent space of the current manifold state ($T_{\mathbf{x}}\mathcal{M}$). This theoretically prevents the features from collapsing into degenerate subspaces.
    \item \textbf{Dynamic Traversal (via DDL):} The step size along this tangent direction should be dynamic and reversible, allowing the model to perform ``Erasure'' (moving backwards) or ``Identity'' (staying still) operations explicitly.
\end{enumerate}

Our contributions are as follows:
\begin{itemize}
    \item We formulate \textbf{Manifold-Constrained Hyper-Connections (mHC)}, a lightweight projection mechanism to regularize the output of Attention/FFN blocks.
    \item We integrate \textbf{Deep Delta Learning (DDL)} with a dynamic gate $\beta$, enabling geometric Householder updates.
    \item We provide a theoretical grounding for the synergy between mHC and DDL, arguing that they act as multipliers for signal integrity.
    \item We propose a rigorous experimental framework designed to falsify the hypothesis that geometric constraints are essential for scaling Transformers beyond current depth limits.
\end{itemize}

\section{Related Work}
\label{sec:related}

\textbf{Rank Collapse in Deep Networks.}
\citet{dong2021attention} provided seminal theoretical analysis showing that pure self-attention layers lose rank doubly exponentially with depth, fundamentally limiting representational capacity. This collapse occurs even with modern normalization techniques \citep{ba2016layer,xiong2020layer}, suggesting that architectural modifications beyond optimization tricks are necessary. Recent depth-scaling efforts \citep{wang2022deepnet,Touvron_2021_ICCV} have proposed specialized initialization and normalization schemes to train networks exceeding 100 layers, yet these approaches treat symptoms rather than addressing the geometric root cause of collapse.

\textbf{Residual Connections and Depth.}
Since ResNets \citep{he2016deep}, residual connections have enabled training of deep networks by providing gradient highways. However, the standard formulation $\mathbf{x}_{l+1} = \mathbf{x}_l + \mathcal{F}(\mathbf{x}_l)$ implicitly assumes additive updates in Euclidean space are universally beneficial. As \citet{Touvron_2021_ICCV} observed, this monotonic accumulation can lead to feature redundancy in very deep networks. Our work addresses this through explicit erasure mechanisms.

\textbf{Manifold-Constrained Hyper-Connections (mHC).}
The recently proposed mHC framework \citep{xie2025mhc} addresses stability in width-scaling by constraining hyper-connections to geometric manifolds (e.g., the Birkhoff polytope), preserving the identity mapping property of ResNets. We adopt this perspective as a ``spatial regularizer'' that projects updates onto the tangent space $T_{\mathbf{x}}\mathcal{M}$ of the data manifold. While \citet{xie2025mhc} focused on connection \textit{geometry}, we argue that \textit{dynamics} along these connections require complementary mechanisms to address feature redundancy.

\textbf{Deep Delta Learning (DDL).}
Our dynamic update law builds on the recent \textit{Deep Delta Learning} framework \citep{zhang2026deep}, which generalizes residual connections via a learnable \textbf{Delta Operator}—a rank-1 perturbation of the identity matrix. The Delta Operator is formally defined as:
\begin{equation}
    \mathbf{A}(\mathbf{X}) = \mathbf{I} - \beta(\mathbf{X})\mathbf{k}(\mathbf{X})\mathbf{k}(\mathbf{X})^\top
    \label{eq:delta_operator}
\end{equation}
where $\mathbf{k}(\mathbf{X}) \in \mathbb{R}^d$ is a unit-normalized reflection direction vector and $\beta(\mathbf{X}) \in [0, 2]$ is a data-dependent gating scalar. The spectral analysis (Theorem 3.1 in \citet{zhang2026deep}) reveals that $\mathbf{A}$ has $(d-1)$ eigenvalues of $1$ spanning $\mathbf{k}^\perp$ and one eigenvalue of $(1-\beta)$ with eigenvector $\mathbf{k}$, yielding $\det(\mathbf{A}) = 1 - \beta$. This enables \textbf{three fundamental geometric regimes}: identity preservation ($\beta \to 0$), orthogonal projection/forgetting ($\beta \to 1$, $\det \to 0$), and Householder reflection ($\beta \to 2$, $\det \to -1$). The complete Delta Residual Block combines this operator with a rank-1 value injection:
\begin{equation}
    \mathbf{X}_{l+1} = \mathbf{A}(\mathbf{X}_l)\mathbf{X}_l + \beta(\mathbf{X}_l)\mathbf{k}(\mathbf{X}_l)\mathbf{v}(\mathbf{X}_l)^\top = \mathbf{X}_l + \beta_l\mathbf{k}_l(\mathbf{v}_l^\top - \mathbf{k}_l^\top\mathbf{X}_l)
    \label{eq:delta_update}
\end{equation}
This additive form reveals synchronous ``erase-and-write'' semantics: the term $\mathbf{k}_l^\top\mathbf{X}_l$ represents old memory to be erased, while $\mathbf{v}_l^\top$ injects new information—both scaled by the same gate $\beta_l$. This recovers the classical Delta Rule from associative memory theory, reinterpreting network depth as iterative memory refinement.

\textbf{Synergizing Geometry and Dynamics.}
While DDL provides erasure capability, applying it in unconstrained Euclidean space risks optimization difficulties. Conversely, mHC provides manifold constraints but lacks backward traversal (erasure) along the manifold. Our \textbf{MGT} architecture unifies these orthogonal contributions: mHC defines \textit{valid directions} while DDL controls \textit{magnitude and sign}, achieving stable ``Erasure-and-Write'' dynamics for ultra-deep scaling.
\section{Methodology}
\label{sec:method}

We formulate the proposed \textbf{Manifold-Geometric Transformer (MGT)} block. We begin by establishing the geometric premise of feature propagation in deep networks, followed by the detailed derivation of our two core components.

\subsection{Architecture Overview}
We redesign the fundamental building block of the Transformer to explicitly separate feature \textit{generation} from feature \textit{propagation}. As illustrated in Figure \ref{fig:arch}, the proposed \textbf{Manifold-Geometric Transformer (MGT) Block} deviates from the standard Post-LayerNorm (Post-LN) structure by introducing a geometric processing stage before the residual addition.

The forward pass of an MGT layer $l$ is decomposed into three distinct phases:

\begin{enumerate}
    \item \textbf{Raw Feature Generation:}
    First, the input state $\mathbf{X}_l$ is normalized and processed by a standard mixing module (Multi-Head Self-Attention or Feed-Forward Network) to generate a candidate update vector $\mathbf{V}_{raw}$.
    
    \item \textbf{Geometric Rectification (via mHC):}
    Instead of direct addition, $\mathbf{V}_{raw}$ is passed through the \textbf{Manifold-Constrained Hyper-Connection (mHC)} module. This module acts as a spatial filter, projecting the raw update onto the estimated tangent space of the data manifold.
    
    \item \textbf{Delta Dynamics (via DDL):}
    Finally, the \textbf{Deep Delta Learning (DDL)} controller computes a dynamic gating scalar $\boldsymbol{\beta}$ based on the input context $\mathbf{X}_l$. This scalar modulates the rectified vector. Crucially, to strictly approximate a geometric reflection, we include a subtraction term proportional to the current state projection, allowing for true ``erasure'' mechanics.
\end{enumerate}

\begin{figure}[t]
\centering
\includegraphics[width=\textwidth]{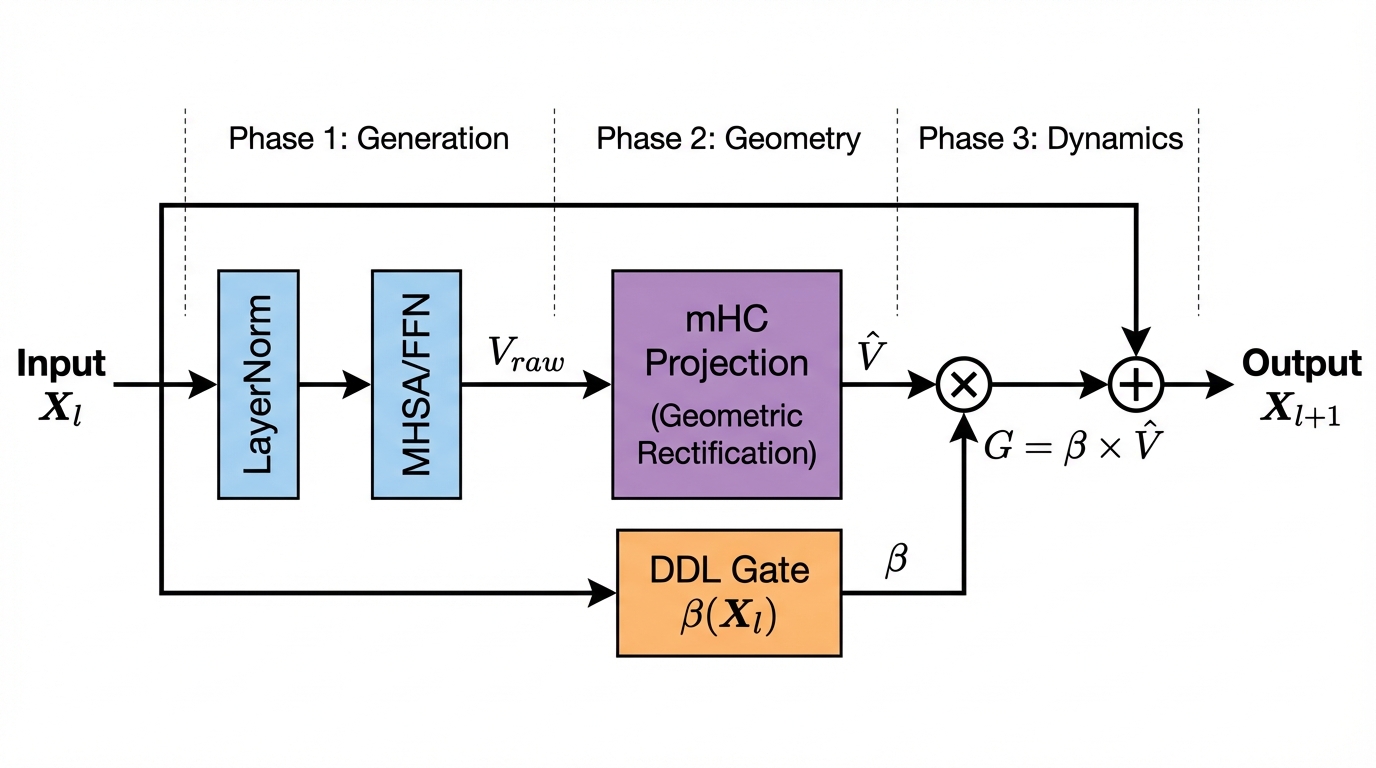}
\caption{\textbf{Architecture of the Manifold-Geometric Transformer (MGT) Block.} The pipeline explicitly separates three phases: (1) \textit{Generation} via LayerNorm and MHSA/FFN producing raw updates $\mathbf{V}_{raw}$, (2) \textit{Geometric Rectification} via mHC projection $\Psi$ constraining updates to the tangent space, and (3) \textit{Dynamics} via the DDL gate $\boldsymbol{\beta}(\mathbf{X}_l)$ controlling update magnitude. The skip connection preserves identity mapping while the gated output $\mathbf{G} = \boldsymbol{\beta} \odot \mathbf{V}_{mHC}$ enables selective erasure and accumulation.}
\label{fig:arch}
\end{figure}

\subsection{Preliminaries: The Manifold Hypothesis in Deep Layers}
Let $\{\mathbf{X}_l\}_{l=0}^L$ denote the sequence of hidden states in a Transformer, where $\mathbf{X}_l \in \mathbb{R}^{S \times D}$ represents $S$ tokens with $D$-dimensional features. We posit that valid semantic representations do not populate the entire Euclidean space $\mathbb{R}^D$, but rather lie on a lower-dimensional Riemannian manifold $\mathcal{M} \subset \mathbb{R}^D$.
A naive step $\mathbf{X}_l + \mathbf{V}_l$ often forces the representation off the manifold ($\mathbf{X}_{l+1} \notin \mathcal{M}$). To maintain structural integrity, the update vector must be constrained to the local tangent space $T_{\mathbf{X}_l}\mathcal{M}$.

\subsection{The Delta Operator}

The key innovation of Deep Delta Learning \citep{zhang2026deep} is to generalize the classical Householder reflection (see Appendix~\ref{app:proofs}) by replacing the constant factor $2$ with a learnable, data-dependent scalar gate $\beta(\mathbf{X})$.

\begin{definition}[Delta Operator]
\label{def:delta_operator}
For a unit vector $\mathbf{k} \in \mathbb{R}^d$ and scalar $\beta \in \mathbb{R}$, the \textbf{Delta Operator} is:
\begin{equation}
    \mathbf{A}(\beta, \mathbf{k}) = \mathbf{I} - \beta\mathbf{k}\mathbf{k}^\top
    \label{eq:delta_op_def}
\end{equation}
This constitutes a rank-1 perturbation of the identity. When $\beta = 2$, it recovers the Householder reflector.
\end{definition}

The geometric behavior of the Delta Operator is characterized by its spectrum:

\begin{theorem}[Spectral Decomposition]
\label{thm:spectral}
The spectrum of $\mathbf{A} = \mathbf{I} - \beta\mathbf{k}\mathbf{k}^\top$ is $\sigma(\mathbf{A}) = \{1, \ldots, 1, 1 - \beta\}$ with $(d-1)$ eigenvalues of $1$ in $\mathbf{k}^\perp$ and eigenvalue $(1-\beta)$ for eigenvector $\mathbf{k}$. Consequently, $\det(\mathbf{A}) = 1 - \beta$.
\end{theorem}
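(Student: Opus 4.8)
The plan is to prove the spectral decomposition directly by exhibiting a full eigenbasis, which immediately yields the determinant as the product of eigenvalues. First I would handle the eigenvalue $1-\beta$: since $\mathbf{k}$ is a unit vector, $\mathbf{k}^\top\mathbf{k} = 1$, so $\mathbf{A}\mathbf{k} = \mathbf{k} - \beta\mathbf{k}(\mathbf{k}^\top\mathbf{k}) = (1-\beta)\mathbf{k}$, confirming $\mathbf{k}$ is an eigenvector with eigenvalue $1-\beta$. Next I would treat the orthogonal complement: for any $\mathbf{w} \in \mathbf{k}^\perp$ (i.e.\ $\mathbf{k}^\top\mathbf{w} = 0$), we get $\mathbf{A}\mathbf{w} = \mathbf{w} - \beta\mathbf{k}(\mathbf{k}^\top\mathbf{w}) = \mathbf{w}$, so every vector in the $(d-1)$-dimensional subspace $\mathbf{k}^\perp$ is an eigenvector with eigenvalue $1$.

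Then I would argue these exhaust the spectrum: $\mathbb{R}^d = \mathrm{span}(\mathbf{k}) \oplus \mathbf{k}^\perp$ is an orthogonal direct sum decomposition into $\mathbf{A}$-invariant subspaces of dimensions $1$ and $d-1$, so we have found a complete basis of eigenvectors and the spectrum is exactly $\sigma(\mathbf{A}) = \{\underbrace{1, \ldots, 1}_{d-1}, 1-\beta\}$ (counted with multiplicity). Equivalently, one can extend $\mathbf{k}$ to an orthonormal basis $\{\mathbf{k}, \mathbf{u}_2, \ldots, \mathbf{u}_d\}$ and observe that in this basis $\mathbf{A}$ is the diagonal matrix $\mathrm{diag}(1-\beta, 1, \ldots, 1)$. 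The determinant claim follows since the determinant is basis-independent and equals the product of the diagonal entries: $\det(\mathbf{A}) = (1-\beta)\cdot 1^{d-1} = 1-\beta$.

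Honestly, there is no real obstacle here — this is a textbook rank-one-update computation, and the only thing to be careful about is the normalization hypothesis $\|\mathbf{k}\| = 1$, which is exactly what makes $\mathbf{A}\mathbf{k} = (1-\beta)\mathbf{k}$ come out cleanly (without it one would get eigenvalue $1 - \beta\|\mathbf{k}\|^2$). If one wanted a slicker one-line determinant argument as a sanity check, I would also mention the matrix determinant lemma: $\det(\mathbf{I} - \beta\mathbf{k}\mathbf{k}^\top) = 1 - \beta\mathbf{k}^\top\mathbf{k} = 1-\beta$. I would present the eigenbasis argument as the main proof since it delivers the full spectral statement, and relegate the determinant lemma to a parenthetical remark confirming consistency.
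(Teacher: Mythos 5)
Your proof is correct and takes essentially the same route as the paper's: you verify directly that $\mathbf{k}$ is an eigenvector with eigenvalue $1-\beta$ and that every vector in $\mathbf{k}^\perp$ is fixed, then note that the orthogonal decomposition $\mathbb{R}^d = \mathrm{span}(\mathbf{k}) \oplus \mathbf{k}^\perp$ exhausts the spectrum, from which the determinant follows. The extra remarks (the explicit diagonalization in an orthonormal basis, the matrix determinant lemma as a cross-check, the caveat about $\|\mathbf{k}\| = 1$) are correct and harmless elaborations on the same argument.
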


This spectral structure yields three fundamental geometric regimes as $\beta$ varies in $[0, 2]$:

\begin{proposition}[Geometric Regimes]
\label{prop:geometric_regimes}
\textbf{(i) Identity} ($\beta \to 0$): $\mathbf{A} \to \mathbf{I}$, preserving input and enabling layer skipping.
\textbf{(ii) Projection} ($\beta \to 1$): $\mathbf{A} \to \mathbf{P}_{\mathbf{k}^\perp}$, erasing the $\mathbf{k}$-component ($\det \to 0$).
\textbf{(iii) Reflection} ($\beta \to 2$): $\mathbf{A} \to \mathbf{H}_\mathbf{k}$, inverting the $\mathbf{k}$-component ($\det \to -1$).
\end{proposition}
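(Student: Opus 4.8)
The plan is to reduce each of the three regimes to a direct evaluation of the rank-1 family $\mathbf{A}(\beta,\mathbf{k}) = \mathbf{I} - \beta\mathbf{k}\mathbf{k}^\top$ at a single value of $\beta$, using that the map $\beta \mapsto \mathbf{A}(\beta,\mathbf{k})$ is affine in $\beta$ for each fixed unit vector $\mathbf{k}$ and hence continuous; thus the claims ``$\beta \to 0$'', ``$\beta \to 1$'', ``$\beta \to 2$'' may be replaced by evaluation at the endpoints, and the determinant in each case is read off directly from Theorem~\ref{thm:spectral} as $\det\mathbf{A} = 1-\beta$. I would open with this continuity observation and then dispatch the three cases in turn.

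For (i), $\beta = 0$ gives $\mathbf{A} = \mathbf{I}$ and $\det\mathbf{A} = 1$; the ``layer-skipping'' reading is immediate, since the block update in \eqref{eq:delta_update} collapses to $\mathbf{X}_{l+1} = \mathbf{X}_l$. For (ii), $\beta = 1$ gives $\mathbf{A} = \mathbf{I} - \mathbf{k}\mathbf{k}^\top$, which I would identify as the orthogonal projector $\mathbf{P}_{\mathbf{k}^\perp}$ by checking symmetry $\mathbf{A}^\top = \mathbf{A}$, idempotence $\mathbf{A}^2 = \mathbf{I} - 2\mathbf{k}\mathbf{k}^\top + \mathbf{k}(\mathbf{k}^\top\mathbf{k})\mathbf{k}^\top = \mathbf{A}$ (using $\|\mathbf{k}\| = 1$), and the kernel/range behavior $\mathbf{A}\mathbf{k} = \mathbf{0}$ with $\mathbf{A}\mathbf{v} = \mathbf{v}$ for every $\mathbf{v} \perp \mathbf{k}$; the rank drop and $\det\mathbf{A} = 0$ then match the $(1-\beta)=0$ eigenvalue supplied by Theorem~\ref{thm:spectral}. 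For (iii), $\beta = 2$ gives $\mathbf{A} = \mathbf{I} - 2\mathbf{k}\mathbf{k}^\top$, which is by Definition~\ref{def:delta_operator} the Householder reflector $\mathbf{H}_\mathbf{k}$; I would confirm it is an orthogonal involution via $\mathbf{A}^\top\mathbf{A} = \mathbf{A}^2 = \mathbf{I} - 4\mathbf{k}\mathbf{k}^\top + 4\mathbf{k}\mathbf{k}^\top = \mathbf{I}$, note $\mathbf{A}\mathbf{k} = -\mathbf{k}$ while $\mathbf{A}$ fixes $\mathbf{k}^\perp$ pointwise (so it reflects across the hyperplane $\mathbf{k}^\perp$), and record $\det\mathbf{A} = -1$ from Theorem~\ref{thm:spectral}.

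There is no genuine obstacle here: the proposition is an elementary corollary of the rank-1 structure and the spectral theorem. The only point deserving care is that each limiting operator must be pinned down by its structural characterization — symmetry plus idempotence plus the correct kernel/range for $\mathbf{P}_{\mathbf{k}^\perp}$, and orthogonality plus the involution identity plus the sign flip on $\mathbf{k}$ for $\mathbf{H}_\mathbf{k}$ — rather than merely by its determinant, which alone does not determine the operator. If a quantitative form of the limits is desired, one can observe that $\|\mathbf{A}(\beta,\mathbf{k}) - \mathbf{A}(\beta_0,\mathbf{k})\|_{2} = |\beta - \beta_0|$, since $\mathbf{k}\mathbf{k}^\top$ has spectral norm $1$, so each convergence statement holds uniformly over all unit directions $\mathbf{k}$.
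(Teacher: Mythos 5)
Your proof is correct and follows essentially the same route the paper implies: Proposition~\ref{prop:geometric_regimes} is stated in the paper as an immediate consequence of Theorem~\ref{thm:spectral} with no standalone proof in the appendix, so reading off the eigenvalue $1-\beta$ and the determinant $1-\beta$ at $\beta\in\{0,1,2\}$ is exactly what the authors intend. Your extra step of pinning down $\mathbf{P}_{\mathbf{k}^\perp}$ via symmetry, idempotence, and kernel/range, and $\mathbf{H}_\mathbf{k}$ via orthogonality and the involution identity, is a worthwhile refinement---the determinant alone does not characterize the operator---but it does not change the underlying argument, and the affine-in-$\beta$ continuity observation correctly justifies replacing the three limits by endpoint evaluation.
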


The complete \textbf{Delta Residual Block} combines this operator with value injection:
\begin{equation}
    \mathbf{X}_{l+1} = \mathbf{A}(\mathbf{X}_l)\mathbf{X}_l + \beta_l\mathbf{k}_l\mathbf{v}_l^\top = \mathbf{X}_l + \beta_l\mathbf{k}_l(\mathbf{v}_l^\top - \mathbf{k}_l^\top\mathbf{X}_l)
    \label{eq:delta_block}
\end{equation}
The additive form reveals ``erase-and-write'' semantics: $\mathbf{k}_l^\top\mathbf{X}_l$ is erased while $\mathbf{v}_l^\top$ is written, both scaled by $\beta_l$. This recovers the classical Delta Rule from associative memory (proofs in Appendix~\ref{app:proofs}).

\subsection{Manifold-Constrained Hyper-Connections (mHC)}
The goal of mHC is to rectify the raw update $\mathbf{V}_{raw}$ generated by the sub-layers (MHSA or FFN). Since computing the exact tangent space via eigendecomposition is computationally prohibitive, we employ an efficient \textbf{Soft Subspace Approximation}.

We define the mHC operator $\Psi(\cdot)$ as a learnable gating mechanism that approximates orthogonal projection:
\begin{equation}
    \mathbf{V}_{mHC} = \mathbf{V}_{raw} \odot \sigma(\text{LN}(\mathbf{X}_l \mathbf{W}_{gate}))
    \label{eq:mhc}
\end{equation}
where $\mathbf{W}_{gate} \in \mathbb{R}^{D \times D}$ projects the input to semantic importance scores. While not a strict orthogonal projection in the linear algebra sense, this acts as a \textit{soft} manifold constraint, suppressing feature dimensions (noise subspace) that deviate from the current semantic trajectory defined by $\mathbf{X}_l$.

\subsection{MGT Integration: Combining mHC with DDL}

We now describe how MGT integrates DDL with mHC to achieve stable geometric updates.

\subsubsection{The MGT Delta Controller}

In MGT, we treat the mHC-rectified output $\mathbf{V}_{mHC}$ as the value branch in the Delta Residual Block (Eq.~\ref{eq:delta_block}). We introduce a \textbf{Delta Controller} $\boldsymbol{\beta}(\mathbf{X}_l)$:
\begin{equation}
    \label{eq:beta}
    \boldsymbol{\beta} = \lambda \cdot \tanh(\mathbf{X}_l \mathbf{W}_{\beta} + \mathbf{b}_{\beta}) + \epsilon
\end{equation}
where $\mathbf{W}_{\beta} \in \mathbb{R}^{D \times D}$ and $\mathbf{b}_{\beta} \in \mathbb{R}^D$ are learnable parameters, $\lambda$ scales the output range, and $\epsilon$ provides numerical stability.

\begin{remark}[Range Extension]
The original DDL uses $\beta \in [0, 2]$ via sigmoid ($\beta = 2\sigma(\cdot)$). Our $\tanh$ parameterization with offset $\epsilon$ yields $\beta \in [-\lambda + \epsilon, \lambda + \epsilon]$, extending the geometric capability to include negative gates for more aggressive signed updates. Setting $\epsilon = 0$ and $\lambda = 1$ recovers the standard range $[-1, 1]$.
\end{remark}

\subsubsection{The MGT Update Rule}

Following the additive form (Eq.~\ref{eq:delta_block}), the MGT update implements synchronous erase-and-write semantics:
\begin{equation}
    \mathbf{X}_{l+1} = \mathbf{X}_l + \boldsymbol{\beta} \odot (\mathbf{V}_{mHC} - \alpha \cdot \mathbf{X}_l)
    \label{eq:mgt_update}
\end{equation}
where $\alpha \in \mathbb{R}$ is a learnable scalar controlling the erasure strength. In the full DDL formulation, this corresponds to the projection $\text{Proj}_{\mathbf{k}}(\mathbf{X}_l)$; we simplify by approximating with the scaled state $\alpha \cdot \mathbf{X}_l$.

\subsubsection{Synergy Analysis}

The combination of mHC and DDL provides complementary geometric controls:

\begin{proposition}[mHC-DDL Synergy]
\label{prop:synergy}
Let $\mathbf{V}_{mHC} = \Psi(\mathbf{V}_{raw})$ be the mHC-rectified update. The MGT update rule provides:
\begin{enumerate}
    \item \textbf{Direction Control (mHC):} Ensures $\mathbf{V}_{mHC}$ lies approximately in the tangent space $T_{\mathbf{X}}\mathcal{M}$, preventing manifold drift.
    \item \textbf{Magnitude Control (DDL):} The gate $\beta$ modulates the step size along the rectified direction. When $\mathbf{V}_{mHC} > \alpha\mathbf{X}_l$ (value-dominant), the update accumulates new features; when $\mathbf{V}_{mHC} < \alpha\mathbf{X}_l$ (projection-dominant), the update erases existing features. Negative $\beta$ inverts the update direction entirely.
    \item \textbf{Gradient Preservation:} By Theorem~\ref{thm:spectral}, $(d-1)$ eigenvalues remain exactly $1$, ensuring unimpeded gradient flow through $\mathbf{k}^\perp$ even under aggressive erasure.
\end{enumerate}
\end{proposition}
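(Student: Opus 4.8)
\noindent\emph{Proof plan.} The plan is to read Proposition~\ref{prop:synergy} as three essentially independent structural claims and to verify each directly against the definitions, with part~3 reduced to an application of Theorem~\ref{thm:spectral}. Almost no computation is needed; the substance lies in stating precisely what the word ``approximately'' is allowed to hide.

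For part~1 (Direction Control), I would first note that for a fixed input $\mathbf{X}_l$ the mHC map of Equation~\ref{eq:mhc} is the diagonal linear operator $\mathbf{V}_{raw}\mapsto \mathbf{D}_g\mathbf{V}_{raw}$ with $\mathbf{D}_g=\mathrm{diag}(g)$ and $g=\sigma(\mathrm{LN}(\mathbf{X}_l\mathbf{W}_{gate}))\in(0,1)^D$. Any diagonal operator with spectrum in $[0,1]$ is nonexpansive and suppresses the coordinates on which $g_i\approx 0$; in the saturated limit $g_i\to\{0,1\}$ it equals the orthogonal projector onto $\mathrm{span}\{\mathbf{e}_i : g_i=1\}$. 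I would then invoke the manifold hypothesis stated in Section~\ref{sec:method} together with the modelling assumption that $\mathbf{W}_{gate}$ may encode a rotation into local manifold coordinates, under which this coordinate subspace is identified with $T_{\mathbf{X}_l}\mathcal{M}$; hence $\mathbf{V}_{mHC}=\Psi(\mathbf{V}_{raw})\in T_{\mathbf{X}_l}\mathcal{M}$ up to the gate-softness defect, which is controlled by how far $g$ is from binary and vanishes as the gates saturate. I expect this to be the main obstacle, because ``$\mathbf{V}_{mHC}$ lies approximately in $T_{\mathbf{X}}\mathcal{M}$'' is a modelling statement rather than a derivable identity; the honest route is to make the local-coordinate-alignment hypothesis explicit, prove the claim exactly in the saturated-gate regime, and present the general case as a quantified approximation.

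For part~2 (Magnitude Control), I would argue purely from Equation~\ref{eq:mgt_update}. Writing the update coordinatewise gives $(\mathbf{X}_{l+1})_i=(1-\alpha\beta_i)(\mathbf{X}_l)_i+\beta_i(\mathbf{V}_{mHC})_i$ with increment $\Delta_i=\beta_i\bigl((\mathbf{V}_{mHC})_i-\alpha(\mathbf{X}_l)_i\bigr)$, and a case split on $\sign(\beta_i)$ and on $\sign\bigl((\mathbf{V}_{mHC})_i-\alpha(\mathbf{X}_l)_i\bigr)$ yields exactly the accumulate / erase / sign-invert trichotomy asserted. I would add two remarks that anchor the surrogate to the exact theory: the scalar map $x\mapsto(1-\alpha\beta_i)x+\beta_i(\mathbf{V}_{mHC})_i$ has fixed point $(\mathbf{V}_{mHC})_i/\alpha$ and is contractive iff $0<\alpha\beta_i<2$; and the distinguished values $\alpha\beta_i\in\{0,1,2\}$ reproduce the identity, overwrite-forgetting, and reflection regimes of Proposition~\ref{prop:geometric_regimes} with $\beta$ replaced by the effective gate $\alpha\beta_i$. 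The sign-inversion branch is non-vacuous precisely because Equation~\ref{eq:beta} permits $\beta_i<0$, so I would cite the Range Extension remark at this point.

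For part~3 (Gradient Preservation), the claim is a direct corollary of Theorem~\ref{thm:spectral}. In the rank-$1$ Delta form of Equation~\ref{eq:delta_block} the state-to-state linear part is $\mathbf{A}(\mathbf{X}_l)=\mathbf{I}-\beta\mathbf{k}\mathbf{k}^\top$, which by Theorem~\ref{thm:spectral} has $(d-1)$ eigenvalues equal to $1$ with eigenspace $\mathbf{k}^{\perp}$; differentiating the block while treating $\beta$ and $\mathbf{k}$ as slowly varying, the Jacobian $\partial\mathbf{X}_{l+1}/\partial\mathbf{X}_l$ inherits this spectrum, so its singular values along $\mathbf{k}^{\perp}$ equal $1$ and backpropagated gradients are neither amplified nor attenuated in those $d-1$ directions, no matter how close $\beta$ is to $1$ or $2$. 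The only contracted or reflected direction is $\mathbf{k}$ itself --- exactly the direction being erased and written --- so the health of the complementary $d-1$ directions is unconditional, which is the content of the claim. The caveat I would flag is that this argument uses the exact rank-$1$ operator rather than the diagonal surrogate $\mathbf{I}-\alpha\,\mathrm{diag}(\boldsymbol{\beta})$ of Equation~\ref{eq:mgt_update}; since that surrogate's Jacobian is itself diagonal with multiplier $1-\alpha\beta_i$ on coordinate $i$, the same conclusion holds coordinatewise, with non-unit multipliers confined to the coordinates currently undergoing erasure.
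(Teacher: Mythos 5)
The paper never proves Proposition~\ref{prop:synergy}: the appendix, despite its title ``Mathematical Proofs and Derivations,'' covers only Theorem~\ref{thm:spectral}, the Householder lemma, the orthogonality corollary, and the additive rewrite of Eq.~\ref{eq:delta_block}. There is therefore no paper proof to compare against; your proposal is the only candidate argument. Your stratification of the three parts is sound. For part~2, the coordinatewise reading of Eq.~\ref{eq:mgt_update} as $(\mathbf{X}_{l+1})_i=(1-\alpha\beta_i)(\mathbf{X}_l)_i+\beta_i(\mathbf{V}_{mHC})_i$ followed by a sign case-split is exactly the right move and does establish the accumulate / erase / invert trichotomy; the fixed-point and contraction observations, and the identification of $\alpha\beta_i\in\{0,1,2\}$ with the regimes of Proposition~\ref{prop:geometric_regimes}, are a genuine strengthening that the paper never makes. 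For part~1, your decision to treat the claim as a modelling statement --- provable exactly only in the saturated-gate limit, and only under an explicit alignment hypothesis between $\mathbf{W}_{gate}$ and local manifold coordinates --- is the honest reading: ``$\mathbf{V}_{mHC}$ lies approximately in $T_{\mathbf{X}}\mathcal{M}$'' cannot be derived from Eq.~\ref{eq:mhc} alone.

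Your caveat on part~3 is correct but understated, and it is the one place where the proposition itself does not quite hold as written. Theorem~\ref{thm:spectral} concerns the rank-$1$ operator $\mathbf{I}-\beta\mathbf{k}\mathbf{k}^\top$, whose $(d-1)$-dimensional unit eigenspace is what the proposition invokes; but the MGT update of Eq.~\ref{eq:mgt_update} has state-to-state Jacobian $\mathbf{I}-\alpha\,\mathrm{diag}(\boldsymbol{\beta})$, a diagonal perturbation that is generically full-rank. Nothing in the architecture or in Eq.~\ref{eq:beta} constrains $\boldsymbol{\beta}$ to be concentrated on a single coordinate, so in general all $d$ multipliers $1-\alpha\beta_i$ can differ from $1$ simultaneously, and the ``$(d-1)$ eigenvalues remain exactly $1$'' guarantee simply does not transfer. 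Your phrase ``non-unit multipliers confined to the coordinates currently undergoing erasure'' is true but tautological --- those are by definition the coordinates with $\beta_i\neq 0$ --- and should not be read as implying sparsity. The accurate statement is: part~3 holds exactly for the idealized Delta Residual Block of Eq.~\ref{eq:delta_block}, and holds for the MGT surrogate only under an additional sparsity assumption on $\boldsymbol{\beta}$ that the paper neither imposes nor enforces. This is a gap in the proposition, not in your proof, but it deserves to be stated as such rather than tucked into a closing caveat.
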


This orthogonal decomposition—mHC controlling \textit{where} to update and DDL controlling \textit{how much}—enables stable ``Erasure-and-Write'' dynamics essential for ultra-deep scaling.

\subsection{Algorithm}

Algorithm~\ref{alg:mgt} summarizes the forward pass of a single MGT block. The procedure explicitly separates the four computational phases: (1) standard feature extraction via LayerNorm and the mixing sub-layer, (2) geometric rectification through the mHC projection, (3) dynamic gate computation via the DDL controller, and (4) the final Householder-style update that combines the rectified features with the erasure term. This modular decomposition enables independent analysis of each component's contribution and facilitates efficient ablation studies.

\newcommand{\algcomment}[1]{\hfill \textcolor{gray!90}{\footnotesize \textit{\# #1}}}
\newcommand{\algphase}[1]{\vspace{0.4em}\STATE \textbf{\textsc{#1}}}

\begin{algorithm}[h]
\caption{Forward Pass of Manifold-Geometric Transformer Block}
\label{alg:mgt}
\begin{algorithmic}[1]
\REQUIRE Input state $\mathbf{X}_l \in \mathbb{R}^{S \times D}$, Sub-layer $\mathcal{F}$
\ENSURE Updated state $\mathbf{X}_{l+1}$
\vspace{0.2em}
\hrule
\vspace{0.4em}

\algphase{1. Raw Generation}
\STATE $\tilde{\mathbf{X}}_l \leftarrow \text{LayerNorm}(\mathbf{X}_l)$
\STATE $\mathbf{V}_{raw} \leftarrow \mathcal{F}(\tilde{\mathbf{X}}_l)$ 
\algcomment{Compute standard attention/FFN update}

\algphase{2. mHC Projection (Geometry)}
\STATE $\mathbf{G}_{manifold} \leftarrow \sigma(\text{LN}(\mathbf{X}_l \mathbf{W}_{gate}))$
\algcomment{Soft approximation of tangent constraint (Eq.~\ref{eq:mhc})}
\STATE $\mathbf{V}_{mHC} \leftarrow \mathbf{V}_{raw} \odot \mathbf{G}_{manifold}$
\algcomment{Project update onto valid manifold}

\algphase{3. DDL Dynamics (Kinematics)}
\STATE $\boldsymbol{\beta} \leftarrow \lambda \cdot \tanh(\mathbf{X}_l \mathbf{W}_{\beta} + \mathbf{b}_{\beta}) + \epsilon$
\algcomment{Determine step size \& sign (Eq.~\ref{eq:beta})}

\algphase{4. Geometric Update (Householder)}
\STATE $\mathbf{V}_{delta} \leftarrow \mathbf{V}_{mHC} - \alpha \cdot \mathbf{X}_l$
\algcomment{Include explicit state subtraction term}
\STATE $\mathbf{X}_{l+1} \leftarrow \mathbf{X}_l + \boldsymbol{\beta} \odot \mathbf{V}_{delta}$ 
\algcomment{Execute generalized reflection/update}

\vspace{0.2em}
\hrule
\vspace{0.2em}
\RETURN $\mathbf{X}_{l+1}$
\end{algorithmic}
\end{algorithm}

\section{Proposed Evaluation Framework}
\label{sec:experiments}

We design a comprehensive experimental protocol consisting of five experiments to rigorously validate the Manifold-Geometric Transformer (MGT). The goal is not merely to achieve state-of-the-art accuracy, but to falsify our core hypothesis: that constraining updates to the tangent space while enabling erasure dynamics leads to superior signal propagation in deep networks.

\subsection{Experimental Setup}

\textbf{Datasets.} We evaluate on language modeling benchmarks:
\begin{itemize}
    \item \textbf{WikiText-2/103:} Standard language modeling benchmarks for measuring perplexity and training dynamics.
    \item \textbf{Synthetic Copy Task:} A controlled task where the model must copy the first half of the sequence to the second half, enabling precise analysis of information flow.
\end{itemize}

\textbf{Baselines.} To isolate the contribution of MGT components:
\begin{itemize}
    \item \textbf{Standard:} Post-LN Transformer as the control group.
    \item \textbf{+mHC Only:} Standard Transformer augmented with mHC projection.
    \item \textbf{+DDL Only:} Standard Transformer augmented with DDL dynamics.
    \item \textbf{MGT (Full):} Complete architecture with both mHC and DDL.
\end{itemize}

\subsection{Experiment 1: Rank Evolution Analysis}

This experiment validates our core hypothesis regarding representational collapse in ultra-deep networks.

\textbf{Setup:} We compare Standard Transformer and MGT across depths $L \in \{12, 24, 48, 100\}$ with model dimension $d=256$. Crucially, we measure effective rank on \textbf{trained models} (not random initialization) after 50 epochs on WikiText-2, as rank collapse manifests during optimization.

\textbf{Metric: Effective Rank.} We compute the normalized effective rank of the hidden state matrix $\mathbf{X}_l \in \mathbb{R}^{S \times D}$ at layer $l$:
\begin{equation}
    \text{Rank}_{eff}(\mathbf{X}_l) = \frac{\exp(H(\boldsymbol{\sigma}(\mathbf{X}_l)))}{\min(S, D)}
\end{equation}
where $H(\boldsymbol{\sigma}) = -\sum_i \hat{\sigma}_i \log \hat{\sigma}_i$ is the Shannon entropy of the normalized singular values $\hat{\sigma}_i = \sigma_i / \sum_j \sigma_j$. The normalization by $\min(S, D)$ ensures $\text{Rank}_{eff} \in [0, 1]$.

\textbf{Hypotheses:}
\begin{itemize}
    \item \textbf{H1:} Standard Transformer exhibits monotonic rank decay, with $\text{Rank}_{eff} \to 0$ as $L \to 100$.
    \item \textbf{H2:} MGT maintains $\text{Rank}_{eff} > 0.5$ even at 100 layers.
\end{itemize}

\textbf{Derived Metrics:} We compute the \textit{Rank Preservation Ratio} $\rho = \text{Rank}_{eff}(L) / \text{Rank}_{eff}(0)$ and the \textit{Rank Decay Rate} as the slope of log-rank vs. depth. Results are averaged over 3 seeds.

\subsection{Experiment 2: Ablation Study (Synergy Verification)}

This experiment quantifies the individual and combined contributions of mHC and DDL.

\textbf{Setup:} We train four model variants with $L=48$ layers on WikiText-2 for 50 epochs with identical hyperparameters ($d=256$, batch size 64, learning rate $10^{-3}$). We use 48 layers to ensure rank collapse is observable in the baseline. Results are averaged over 3 seeds (Table~\ref{tab:protocol_ablation}).

\begin{table}[h]
\caption{\textbf{Ablation Protocol.} Four configurations isolate the geometric and dynamic contributions.}
\label{tab:protocol_ablation}
\begin{center}
\begin{tabular}{l|cc|l}
\toprule
\textbf{Configuration} & \textbf{mHC} & \textbf{DDL} & \textbf{Expected Behavior} \\
\midrule
1. Standard & - & - & Control group (baseline loss). \\
2. +mHC Only & \checkmark & - & Improved stability, limited erasure. \\
3. +DDL Only & - & \checkmark & Erasure capability, potential drift. \\
4. MGT (Full) & \checkmark & \checkmark & \textbf{Optimal: stable geometry + dynamic erasure.} \\
\bottomrule
\end{tabular}
\end{center}
\end{table}

\textbf{Synergy Coefficient.} We define $\mathcal{S}$ to measure super-additive effects:
\begin{equation}
    \mathcal{S} = \underbrace{(\mathcal{L}_{base} - \mathcal{L}_{MGT})}_{\text{MGT Gain}} - \underbrace{(\mathcal{L}_{base} - \mathcal{L}_{mHC})}_{\text{mHC Gain}} - \underbrace{(\mathcal{L}_{base} - \mathcal{L}_{DDL})}_{\text{DDL Gain}}
\end{equation}
A positive $\mathcal{S} > 0$ validates the orthogonal synergy hypothesis.

\subsection{Experiment 3: Beta Distribution Analysis}

This experiment validates that DDL learns meaningful erasure dynamics.

\textbf{Setup:} We train a 100-layer MGT on WikiText-2 for 100 epochs and analyze the distribution of learned $\boldsymbol{\beta}$ values at epochs $\{0, 25, 50, 100\}$. The deep architecture ensures that erasure dynamics have sufficient opportunity to emerge.

\textbf{Hypotheses (Phase Transition):}
\begin{itemize}
    \item \textbf{Early Layers:} $\mathbb{E}[\beta] > 0$ (standard accumulation mode for low-level feature extraction).
    \item \textbf{Deep Layers:} Significant fraction of $\beta < 0$ (update reversal mode), indicating the model learns to ``undo'' or refine early-layer representations. Note: in MGT, erasure also occurs when $\mathbf{V}_{mHC} < \alpha\mathbf{X}_l$ regardless of $\beta$ sign.
    \item \textbf{Training Dynamics:} The transition point (layer at which $\mathbb{E}[\beta]$ changes sign) should sharpen as training progresses.
\end{itemize}

\textbf{Metrics:} Per-layer statistics including $\mathbb{E}[\beta]$, $\text{Var}[\beta]$, and the fraction of negative gates $P(\beta < 0)$.

\subsection{Experiment 4: Depth Scaling}

This experiment tests whether MGT scales more favorably with depth than Standard Transformers.

\textbf{Setup:} To ensure fair comparison, we fix the total parameter count at ${\sim}20$M and vary depth $L \in \{24, 48, 100, 200\}$, adjusting width accordingly. Each configuration is trained for 50 epochs on WikiText-2 with 3 seeds. We use gradient checkpointing to enable training of 200-layer models on a single A100.

\textbf{Metrics:}
\begin{itemize}
    \item \textbf{Final Perplexity:} Lower is better.
    \item \textbf{Convergence Speed:} Epochs to reach target perplexity.
    \item \textbf{Training Stability:} Variance of loss across seeds.
\end{itemize}

\textbf{Hypothesis:} MGT should exhibit favorable depth scaling, achieving lower perplexity at greater depths where Standard Transformers degrade.

\subsection{Experiment 5: Language Modeling}

This experiment provides end-to-end validation on a realistic benchmark.

\textbf{Setup:} We train both Standard Transformer and MGT on WikiText-103 for 100 epochs with matched architectures ($d=512$, $L=24$, $h=8$, ${\sim}50$M parameters). We additionally evaluate on the larger OpenWebText corpus to test scaling behavior.

\textbf{Metrics:}
\begin{itemize}
    \item \textbf{Perplexity (PPL):} Primary evaluation metric on validation set.
    \item \textbf{Learning Curves:} Training and validation loss trajectories.
    \item \textbf{Parameter Overhead:} mHC adds ${\sim}2L \cdot d^2$ parameters; DDL adds ${\sim}2L \cdot d^2$ parameters (total ${\sim}25\%$ overhead).
\end{itemize}

\textbf{Expected Outcome:} MGT should achieve lower perplexity with improved training stability. The gap should widen with increased depth.
\section{Conclusion}

In this work, we introduced the \textbf{Manifold-Geometric Transformer (MGT)}, a theoretical framework that unifies two disparate concepts in deep learning: manifold constraints and residual dynamics. By mathematically formalizing the residual update as a navigation problem on a latent manifold, we derived the need for orthogonal projection (mHC) combined with signed geometric updates (DDL).

While empirical validation is the immediate next step, our theoretical analysis and proposed experimental design suggest that MGT offers a principled solution to the long-standing problems of rank collapse and residual accumulation. We believe this geometric perspective—viewing layer updates as controlled vectors in a tangent bundle—opens new avenues for designing ultra-deep, mathematically robust neural architectures.

\bibliography{iclr2025_conference}

@article{xie2025mhc,
  title={mHC: Manifold-Constrained Hyper-Connections},
  author={Xie, Zhenda and Wei, Yixuan and Cao, Huanqi and Liang, Wenfeng and others},
  journal={arXiv preprint arXiv:2512.24880},
  year={2025},
  url={https://arxiv.org/abs/2512.24880}
}

@article{zhang2026deep,
  title={Deep Delta Learning},
  author={Zhang, Yifan and Liu, Yifeng and Wang, Mengdi and Gu, Quanquan},
  journal={arXiv preprint arXiv:2601.00417},
  year={2026},
  url={https://arxiv.org/abs/2601.00417},
  note={GitHub: \url{https://github.com/yifanzhang-pro/deep-delta-learning}}
}

@inproceedings{dong2021attention,
  title={Attention is not all you need: Pure attention loses rank doubly exponentially with depth},
  author={Dong, Yihe and Cordonnier, Jean-Baptiste and Loukas, Andreas},
  booktitle={International Conference on Machine Learning (ICML)},
  pages={2793--2803},
  year={2021},
  organization={PMLR}
}

@article{wang2022deepnet,
  title={DeepNet: Scaling Transformers to 1,000 Layers},
  author={Wang, Hongyu and Ma, Shuming and Dong, Li and Huang, Shaohan and Zhang, Dongdong and Wei, Furu},
  journal={arXiv preprint arXiv:2203.00555},
  year={2022}
}

@InProceedings{Touvron_2021_ICCV,
    author    = {Touvron, Hugo and Cord, Matthieu and Sablayrolles, Alexandre and Synnaeve, Gabriel and J\'egou, Herv\'e},
    title     = {Going Deeper With Image Transformers},
    booktitle = {Proceedings of the IEEE/CVF International Conference on Computer Vision (ICCV)},
    month     = {October},
    year      = {2021},
    pages     = {32-42}
}

@article{he2016deep,
  title={Deep residual learning for image recognition},
  author={He, Kaiming and Zhang, Xiangyu and Ren, Shaoqing and Sun, Jian},
  journal={Proceedings of the IEEE Conference on Computer Vision and Pattern Recognition (CVPR)},
  pages={770--778},
  year={2016}
}

@article{ba2016layer,
  title={Layer normalization},
  author={Ba, Jimmy Lei and Kiros, Jamie Ryan and Hinton, Geoffrey E},
  journal={arXiv preprint arXiv:1607.06450},
  year={2016}
}

@inproceedings{xiong2020layer,
  title={On layer normalization in the transformer architecture},
  author={Xiong, Ruibin and Yang, Yunchang and He, Di and Zheng, Kai and Zheng, Shuxin and Xing, Chen and Zhang, Huishuai and Lan, Yanyan and Wang, Liwei and Liu, Tieyan},
  booktitle={International Conference on Machine Learning (ICML)},
  pages={10524--10533},
  year={2020},
  organization={PMLR}
}
\bibliographystyle{iclr2025_conference}

\appendix
\section{Mathematical Proofs and Derivations}
\label{app:proofs}

This appendix provides complete mathematical foundations and proofs for the results stated in the main text.

\subsection{The Householder Transformation}

\begin{definition}[Householder Matrix]
\label{def:householder}
For a non-zero vector $\mathbf{k} \in \mathbb{R}^d$, the \textbf{Householder matrix} is:
\begin{equation}
    \mathbf{H}_\mathbf{k} = \mathbf{I} - 2\frac{\mathbf{k}\mathbf{k}^\top}{\|\mathbf{k}\|_2^2}
\end{equation}
For unit vectors ($\|\mathbf{k}\|_2 = 1$), this simplifies to $\mathbf{H}_\mathbf{k} = \mathbf{I} - 2\mathbf{k}\mathbf{k}^\top$.
\end{definition}

\begin{lemma}[Properties of Householder Matrices]
\label{lemma:householder_props}
For any Householder matrix $\mathbf{H}_\mathbf{k}$: (1) Symmetry: $\mathbf{H}_\mathbf{k} = \mathbf{H}_\mathbf{k}^\top$; (2) Orthogonality: $\mathbf{H}_\mathbf{k}^\top\mathbf{H}_\mathbf{k} = \mathbf{I}$; (3) Involution: $\mathbf{H}_\mathbf{k}^2 = \mathbf{I}$; (4) Determinant: $\det(\mathbf{H}_\mathbf{k}) = -1$.
\end{lemma}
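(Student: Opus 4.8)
The plan is to verify the four properties directly from the defining formula $\mathbf{H}_\mathbf{k} = \mathbf{I} - 2\mathbf{k}\mathbf{k}^\top$ for unit $\mathbf{k}$ (the general case of Definition~\ref{def:householder} follows by substituting $\mathbf{k} \mapsto \mathbf{k}/\|\mathbf{k}\|_2$), exploiting two elementary facts: the outer product $\mathbf{k}\mathbf{k}^\top$ is symmetric, and $\mathbf{k}^\top\mathbf{k} = \|\mathbf{k}\|_2^2 = 1$.

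First I would establish symmetry (1): since $(\mathbf{k}\mathbf{k}^\top)^\top = \mathbf{k}\mathbf{k}^\top$ and $\mathbf{I}^\top = \mathbf{I}$, transposing term by term gives $\mathbf{H}_\mathbf{k}^\top = \mathbf{H}_\mathbf{k}$. Next I would compute the square for involution (3), expanding $(\mathbf{I} - 2\mathbf{k}\mathbf{k}^\top)^2 = \mathbf{I} - 4\mathbf{k}\mathbf{k}^\top + 4\mathbf{k}(\mathbf{k}^\top\mathbf{k})\mathbf{k}^\top$ and collapsing the last term via $\mathbf{k}^\top\mathbf{k} = 1$, so the cross terms cancel and $\mathbf{H}_\mathbf{k}^2 = \mathbf{I}$. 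Orthogonality (2) is then immediate by composing the two: $\mathbf{H}_\mathbf{k}^\top\mathbf{H}_\mathbf{k} = \mathbf{H}_\mathbf{k}\mathbf{H}_\mathbf{k} = \mathbf{H}_\mathbf{k}^2 = \mathbf{I}$.

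For the determinant (4), the cleanest route is to invoke Theorem~\ref{thm:spectral} with $\beta = 2$: the spectrum of $\mathbf{I} - 2\mathbf{k}\mathbf{k}^\top$ consists of the eigenvalue $1$ with multiplicity $d-1$ on $\mathbf{k}^\perp$ together with the single eigenvalue $1 - \beta = -1$ along $\mathbf{k}$, whence $\det(\mathbf{H}_\mathbf{k}) = 1^{d-1}\cdot(-1) = -1$. Alternatively one can apply the matrix determinant lemma, $\det(\mathbf{I} - 2\mathbf{k}\mathbf{k}^\top) = 1 - 2\mathbf{k}^\top\mathbf{k} = -1$; either argument is a one-liner.

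There is no genuine obstacle here — each property reduces to a short algebraic identity — so the only thing to watch is bookkeeping: stating the unit-norm convention up front (or, if one prefers to work with the general form, carrying the $\|\mathbf{k}\|_2^2$ normalization through the square, where $\mathbf{k}(\mathbf{k}^\top\mathbf{k})\mathbf{k}^\top = \|\mathbf{k}\|_2^2\,\mathbf{k}\mathbf{k}^\top$ still cancels the cross term after dividing by $\|\mathbf{k}\|_2^4$), and citing the determinant claim consistently with the already-established Theorem~\ref{thm:spectral} rather than re-deriving the eigenstructure from scratch.
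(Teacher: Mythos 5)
Your proposal is correct and follows essentially the same route as the paper: properties (1)--(3) by direct algebraic verification from the definition, and the determinant via the eigenstructure (eigenvalue $-1$ along $\mathbf{k}$, eigenvalue $1$ with multiplicity $d-1$ on $\mathbf{k}^\perp$), which is exactly the paper's argument restated as the $\beta=2$ case of Theorem~\ref{thm:spectral}. Your extra care with the non-unit normalization and the matrix-determinant-lemma alternative are fine but not needed.
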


\begin{proof}
Properties (1)-(3) follow directly from the definition. For (4), $\mathbf{H}_\mathbf{k}$ has eigenvalue $-1$ with eigenvector $\mathbf{k}$ and eigenvalue $1$ with multiplicity $(d-1)$ in $\mathbf{k}^\perp$, hence $\det(\mathbf{H}_\mathbf{k}) = (-1) \cdot 1^{d-1} = -1$.
\end{proof}

\subsection{Proof of Theorem~\ref{thm:spectral} (Spectral Decomposition)}

\begin{proof}
We prove that $\mathbf{A} = \mathbf{I} - \beta\mathbf{k}\mathbf{k}^\top$ has spectrum $\sigma(\mathbf{A}) = \{1, \ldots, 1, 1-\beta\}$.

\textbf{Part 1:} For any $\mathbf{u} \in \mathbf{k}^\perp$ (i.e., $\mathbf{k}^\top\mathbf{u} = 0$):
\begin{align}
    \mathbf{A}\mathbf{u} = (\mathbf{I} - \beta\mathbf{k}\mathbf{k}^\top)\mathbf{u} = \mathbf{u} - \beta\mathbf{k}(\mathbf{k}^\top\mathbf{u}) = \mathbf{u}
\end{align}
Thus every vector in $\mathbf{k}^\perp$ is an eigenvector with eigenvalue $1$. Since $\dim(\mathbf{k}^\perp) = d - 1$, this accounts for $(d-1)$ eigenvalues of $1$.

\textbf{Part 2:} For the vector $\mathbf{k}$ itself (with $\|\mathbf{k}\|_2 = 1$):
\begin{align}
    \mathbf{A}\mathbf{k} = (\mathbf{I} - \beta\mathbf{k}\mathbf{k}^\top)\mathbf{k} = \mathbf{k} - \beta\mathbf{k}(\mathbf{k}^\top\mathbf{k}) = (1 - \beta)\mathbf{k}
\end{align}
Thus $\mathbf{k}$ is an eigenvector with eigenvalue $(1 - \beta)$.

Since we have identified $d$ linearly independent eigenvectors, the spectral decomposition is complete. The determinant formula $\det(\mathbf{A}) = 1^{d-1} \cdot (1-\beta) = 1-\beta$ follows immediately.
\end{proof}

\subsection{Additional Corollaries}

\begin{corollary}[Orthogonality Condition]
The Delta Operator $\mathbf{A}$ is orthogonal iff $|1 - \beta| = 1$, i.e., $\beta \in \{0, 2\}$.
\end{corollary}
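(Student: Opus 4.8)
The plan is to prove the Orthogonality Condition corollary by combining the spectral decomposition (Theorem~\ref{thm:spectral}) with the standard fact that a real symmetric matrix is orthogonal if and only if all its eigenvalues lie in $\{-1, +1\}$. First I would observe that $\mathbf{A} = \mathbf{I} - \beta\mathbf{k}\mathbf{k}^\top$ is symmetric for every real $\beta$ (since $(\mathbf{k}\mathbf{k}^\top)^\top = \mathbf{k}\mathbf{k}^\top$), so $\mathbf{A}^\top\mathbf{A} = \mathbf{A}^2$, and orthogonality is equivalent to $\mathbf{A}^2 = \mathbf{I}$. By Theorem~\ref{thm:spectral}, $\mathbf{A}$ is diagonalizable with eigenvalues $1$ (multiplicity $d-1$) and $1-\beta$ (multiplicity $1$), so $\mathbf{A}^2$ has eigenvalues $1$ and $(1-\beta)^2$ with the same eigenvectors. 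Hence $\mathbf{A}^2 = \mathbf{I}$ holds if and only if $(1-\beta)^2 = 1$, i.e. $|1-\beta| = 1$, i.e. $\beta \in \{0, 2\}$.

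An alternative, more self-contained route avoids invoking the symmetric-matrix characterization: just compute $\mathbf{A}^\top\mathbf{A}$ directly. Since $\mathbf{A}^\top = \mathbf{A}$, we get
\begin{equation}
    \mathbf{A}^\top\mathbf{A} = (\mathbf{I} - \beta\mathbf{k}\mathbf{k}^\top)^2 = \mathbf{I} - 2\beta\mathbf{k}\mathbf{k}^\top + \beta^2\mathbf{k}(\mathbf{k}^\top\mathbf{k})\mathbf{k}^\top = \mathbf{I} - (2\beta - \beta^2)\mathbf{k}\mathbf{k}^\top,
\end{equation}
using $\|\mathbf{k}\|_2 = 1$. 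This equals $\mathbf{I}$ precisely when $(2\beta - \beta^2)\mathbf{k}\mathbf{k}^\top = \mathbf{0}$; since $\mathbf{k}\mathbf{k}^\top \neq \mathbf{0}$ (its trace is $1$), this forces $\beta(2-\beta) = 0$, so $\beta \in \{0, 2\}$. Conversely, substituting $\beta = 0$ gives $\mathbf{A} = \mathbf{I}$ and $\beta = 2$ gives the Householder reflector $\mathbf{H}_\mathbf{k}$, both of which are orthogonal by Lemma~\ref{lemma:householder_props}. I would likely present this direct computation as the main proof since it is short and requires nothing beyond the unit-norm assumption on $\mathbf{k}$.

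There is no real obstacle here — the result is essentially immediate from either Theorem~\ref{thm:spectral} or a one-line expansion. The only point requiring a modicum of care is the "only if" direction: one must note that $\mathbf{k}\mathbf{k}^\top$ is a nonzero matrix (equivalently, that it has rank $1$, not rank $0$), which is where the hypothesis $\mathbf{k} \neq \mathbf{0}$ (implicit in $\|\mathbf{k}\|_2 = 1$) is used; without it the scalar $2\beta - \beta^2$ could be arbitrary. I would also remark in passing that this recovers the consistency check $\det(\mathbf{A}) = 1 - \beta = \pm 1$ exactly on $\beta \in \{0,2\}$, matching Lemma~\ref{lemma:householder_props}(4) at $\beta = 2$.
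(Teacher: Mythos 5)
Your proposal is correct, and in fact a bit more careful than the paper's own proof. The paper argues via the spectral decomposition alone: it invokes the claim that a matrix is orthogonal iff all its eigenvalues have absolute value $1$, and since $(d-1)$ eigenvalues equal $1$, concludes $|1-\beta|=1$. As stated, that characterization is only valid for normal (in particular symmetric) matrices — a shear has all eigenvalues equal to $1$ but is not orthogonal — so the paper's argument implicitly relies on the symmetry of $\mathbf{A}$ without saying so. Your first route is exactly the paper's route with this gap filled: you note $\mathbf{A}^\top = \mathbf{A}$, reduce orthogonality to $\mathbf{A}^2 = \mathbf{I}$, and then read off the condition $(1-\beta)^2 = 1$ from Theorem~\ref{thm:spectral}. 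Your preferred direct computation, $\mathbf{A}^\top\mathbf{A} = \mathbf{I} - (2\beta - \beta^2)\mathbf{k}\mathbf{k}^\top$, is a genuinely different and more elementary argument: it bypasses the eigenvalue characterization entirely, needs only $\|\mathbf{k}\|_2 = 1$ and the observation that $\mathbf{k}\mathbf{k}^\top \neq \mathbf{0}$ for the ``only if'' direction, and is arguably the cleaner proof to present. The spectral route buys consistency with Theorem~\ref{thm:spectral} and the determinant remark; the direct route buys self-containedness and avoids the normality subtlety altogether. Either version is acceptable; both reach the same conclusion $\beta \in \{0,2\}$.
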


\begin{proof}
A matrix is orthogonal iff all eigenvalues have absolute value $1$. Since $(d-1)$ eigenvalues are exactly $1$, we require $|1 - \beta| = 1$, yielding $\beta = 0$ or $\beta = 2$.
\end{proof}

\subsection{Proof of Additive Form (Eq.~\ref{eq:delta_block})}

\begin{proof}
Expanding $\mathbf{X}_{l+1} = \mathbf{A}(\mathbf{X}_l)\mathbf{X}_l + \beta_l\mathbf{k}_l\mathbf{v}_l^\top$:
\begin{align}
    \mathbf{X}_{l+1} &= (\mathbf{I} - \beta_l\mathbf{k}_l\mathbf{k}_l^\top)\mathbf{X}_l + \beta_l\mathbf{k}_l\mathbf{v}_l^\top \\
    &= \mathbf{X}_l - \beta_l\mathbf{k}_l\mathbf{k}_l^\top\mathbf{X}_l + \beta_l\mathbf{k}_l\mathbf{v}_l^\top \\
    &= \mathbf{X}_l + \beta_l\mathbf{k}_l(\mathbf{v}_l^\top - \mathbf{k}_l^\top\mathbf{X}_l)
\end{align}
This reveals the ``erase-and-write'' semantics of the Delta Rule.
\end{proof}

\subsection{Vector Case}

When $d_v = 1$ (vector hidden states $\mathbf{x} \in \mathbb{R}^d$), the update simplifies to:
\begin{equation}
    \mathbf{x}_{l+1} = \mathbf{x}_l + \beta_l(v_l - \mathbf{k}_l^\top\mathbf{x}_l)\mathbf{k}_l
\end{equation}
where $v_l \in \mathbb{R}$ is a scalar and $(v_l - \mathbf{k}_l^\top\mathbf{x}_l)$ acts as a data-dependent step size.

\subsection{Dimensional Correspondence for Transformers}

In the Transformer context with hidden states $\mathbf{X}_l \in \mathbb{R}^{S \times D}$, the Delta Operator acts on the hidden dimension $D$. We identify $d = D$ (hidden dimension) and $d_v = 1$ for the simplified vector case, or apply the operator token-wise with shared parameters.

\end{document}